\def\BibTeX{{\rm B\kern-.05em{\sc i\kern-.025em b}\kern-.08em T\kern-.1667em\lower.7ex\hbox{E}\kern-.125emX}}
\newtheorem{theorem}{Theorem}
\newtheorem{lemma}{Lemma}
\newcommand{\ignore}[1]{}  % Ignores sections of text
\begin{document}

\title{An Approximation Algorithm for a Task Allocation, Sequencing and Scheduling Problem involving a Human-Robot Team}

\author{Sai Krishna Kanth Hari$^{1}$, Abhishek Nayak$^{2}$, and Sivakumar Rathinam$^{3}$%

\thanks{$^{1,2,3}$The authors are with the Department of Mechanical Engineering,
Texas A\&M University, College Station, TX, 77843.
Contact: {\tt\small srathinam@tamu.edu}}%
}

\maketitle

\begin{abstract}
This article presents an approximation algorithm for a task allocation, sequencing and scheduling problem involving a team of human operators and robots. Specifically, we present an algorithm with an approximation ratio as a function of the number of human operators ($m$) and the number of robots ($k$) in the team. The approximation ratios are $\frac{7}{2} -\frac{5}{2k}$, $\frac{5}{2} -\frac{1}{k}$ and $\frac{7}{2} -\frac{1}{k}$ when $m=1$, $m\geq k\geq 2$ and $k>m\geq 2$ respectively. We also present computational results to corroborate the performance of the proposed approximation algorithm. 
\end{abstract}

\section{Introduction}

% \begin{wrapfigure}{l}{0.7\linewidth}
% \includegraphics[scale=0.85]{samplefig.pdf}
% \caption{This is the former Share\LaTeX{} logo}
% \end{wrapfigure}

This article considers a mission planning problem which arises when human operators located at a base station have to collaboratively work with mobile robots to visit a set of targets and perform inspection, classification or data collection tasks at the targets. Each target location is associated with a task. There is also a processing time associated with completing each task. Each human operator is only allowed to work on at most one task and each task requires at most one human operator to work on it at any time instant. The robots travel to the target locations, collaboratively work with the human operators to complete the tasks at the locations and return to their initial position. The allocation of targets (or tasks) and the sequence in which the targets must be visited for each robot is not known apriori. Also, when the number of human operators is less than the number of robots, it is possible that a robot has to wait at a target for a human operator to be available to collaboratively work on the task. The mission time of each robot will include its travel time, wait time and the processing times of the tasks at the targets assigned to it. The objective of the problem is to find a sequence of targets (tasks) for each robot to visit and schedule its tasks with the human operators such that each task is performed once by some robot, the scheduling constraints are satisfied and the maximum mission time of any of the robots is minimized. This problem is referred to as the Task Allocation, Sequencing and Scheduling Problem (TASSP). Refer to Fig. \ref{fig:samplepath} for an illustration of this problem.

\begin{figure}[h]
\centering{}
\includegraphics[scale=0.80]{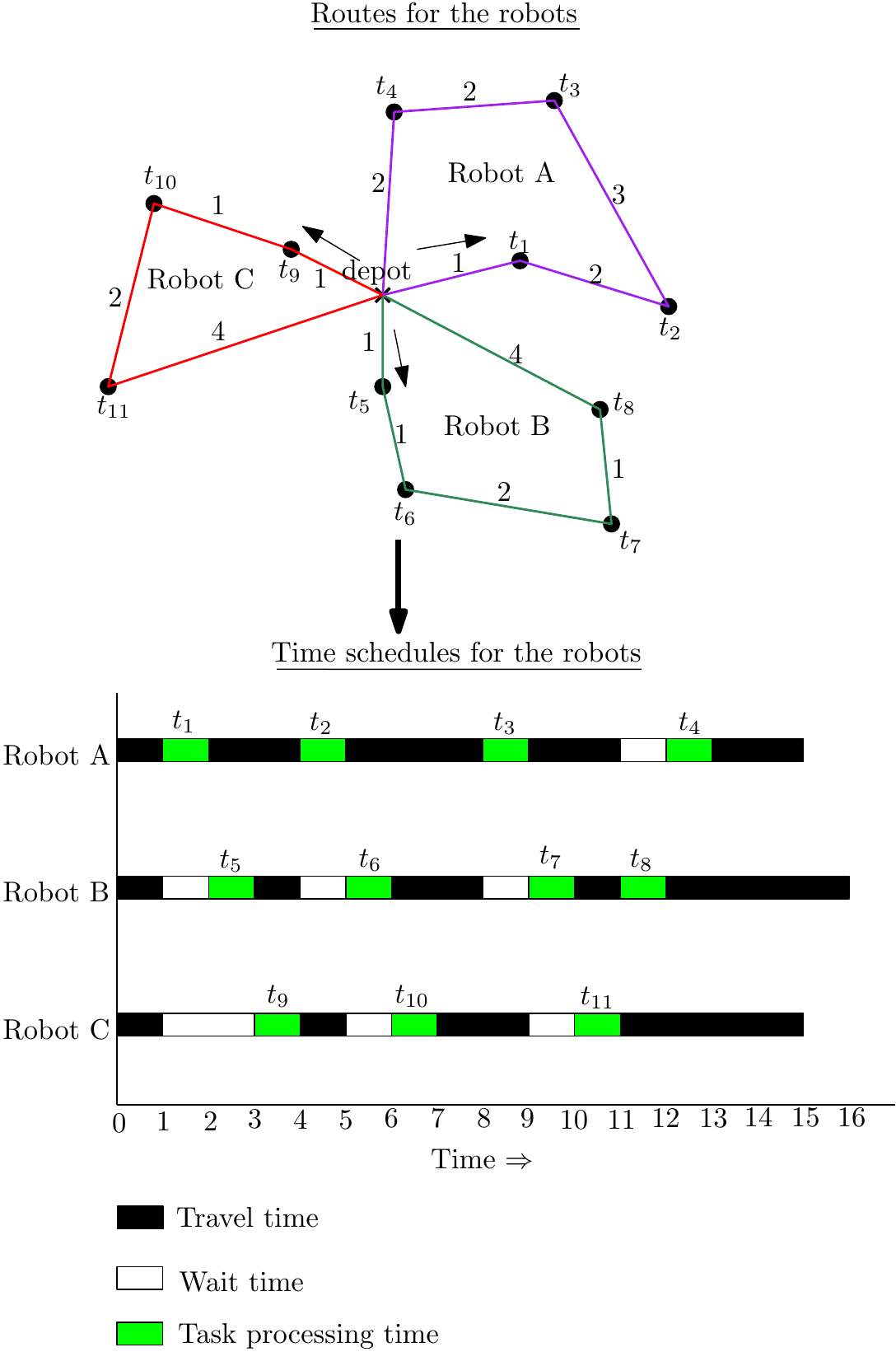}
\caption{An illustration of a feasible set of routes and time schedules for three robots. In this example, there is one human operator and the processing time of the task at any target is 1 unit. There are 11 targets ($t_1,\cdots,t_{11}$) and the time required to travel between any two adjacent targets in a route is also shown. The maximum mission time of any of the robots is 16.}
\label{fig:samplepath}
\vspace{-.5cm}
\end{figure}

TASSP naturally arises in data gathering or surveillance applications involving unmanned systems \cite{cummings2010} where few human operators are expected to collaboratively work with a large team of robots to perform specific tasks at the targets. Incorporating human factor requirements in unmanned vehicle routing has been of significant interest in military applications over the last decade\cite{Murray13,cummings12}. As discussed in the survey\cite{golden2018}, human operators in some unmanned applications should be allowed sufficient time to examine the real-time information at the target locations visited by the vehicles. In-site inspection and target classification applications where human input is critical require robots to interact with a human operator to classify the targets accurately in real-time\cite{Cai16}. Robots may also have manipulators that need to be tele-operated by a human operator. Specifically, in crop monitoring applications\cite{agrisurvey,agri1}, the cameras on a robot are tele-operated by a human operator to get the images with a desired resolution and field of view. 

In the absence of processing times for the tasks and scheduling constraints with the human operators, TASSP reduces to the classic min-max, multiple Traveling Salesman Problem (TSP)\cite{TSPbook} which is NP-Hard. In min-max TSP, the mission time of each robot is incurred only due to travel; the objective is to find a sequence of targets to visit for each robot such that each target is visited at least once, the robots return to their initial location (or the depot), and the maximum travel time of any of the robots is minimized.  On the other hand, if there is no travel involved for the robots (say, all the targets are at the depot itself) and the only objective is to schedule the tasks with the human operators such that no operator works on more than one task at any time instant and the maximum completion time of any task must be minimized, the TASSP reduces to a parallel machine scheduling problem \cite{Graham1969} which is also NP-Hard. In TASSP, clearly, the sequence of targets visited by each robot influence how the corresponding tasks are scheduled with the human operators and vice-versa. Therefore, the TASSP is a challenging problem to solve.

The TASSP belongs to a rich class of problems that couple vehicle routing and machine scheduling. Here, we will discuss the main papers that are closely related to the TASSP. In \cite{how2010}, the authors consider a task allocation problem for a heterogeneous team of human operators and robots. They plan schedules based on a distributed approach while taking into account the workload on the human operators, the availability of robots and some coordination constraints. In \cite{Gombolay2018}, the authors address a joint task allocation and scheduling problem with coupled temporal and spatial constraints in the context of manufacturing applications. They present fast task sequencing heuristics and mixed integer linear programming based methods to find near-optimal solutions for real-world problems. In \cite{peters2018}, a joint robot routing/scheduling problem is considered for reducing the human operator's workload and the loitering times of the robots. The problem is first formulated as a mixed integer nonlinear program and then transformed to a mixed integer linear program, and based on these programs, a dynamic solution strategy is proposed to find sub-optimal schedules and routes.

A very generic version of the TASSP with heterogeneous vehicles, tasks with time window constraints and human operator specific scheduling constraints is presented in \cite{Murray13}. The authors in \cite{Murray13} also present a mathematical programming model and demonstrate the effectiveness of including human operator constraints in planning the routes of the vehicles. In \cite{Cai16}, a collaborative human-robot TSP is formulated where the objective is to maximize the target classification probability for a single robot while deciding on the targets that needs to be visited by the robot, the targets that need human assistance and the targets that can be classified by remote sensing. 

In this article, we are interested in developing an approximation algorithm for the TASSP. An $\alpha-$approximation algorithm for a problem is an algorithm that runs in polynomial time in the size of the input and produces a feasible solution for the problem that is at most $\alpha$ times the optimal cost for any instance of the problem. The quantity $\alpha$ is also referred to as the approximation ratio or approximation factor of the algorithm. For the TASSP, we are not aware of any approximation algorithms in the literature. 

Approximation algorithms help in two critical areas in solving problems such as the TASSP. First, as discussed earlier, the TASSP generalizes the min-max TSP which is known to be notoriously hard to solve for optimal solutions. The reason for this hardness comes from the fact that the lower bounds from standard formulations of the the min-max TSP are weak which in turn places a computational burden on any Branch and Bound or Branch and Cut procedure used to find optimal solutions\cite{Applegate2002}. For example, this is not the case for the multiple vehicle TSP where the objective is to minimize the sum of the travel costs of the vehicles\cite{sundar2016generalized,Sundar2017}. Even in the heterogeneous case, one can develop good bounds, and readily develop Branch and Cut solvers as we have done in \cite{Sundar2017}. Therefore, approximation algorithms such as the ones presented in this paper are very useful for solving the TASSP and provide bounded solutions with small computational cost. Second, the solution provided by an approximation algorithm can be used as an initial solution for meta-heuristics and further be improved upon \cite{levy2014heuristics}, or can be used to warm-start the Branch and Cut solvers as done in \cite{sundar2014algorithms}.

For the min-max TSP when all the robots start from a single depot, there is a well known approximation algorithm by Frederickson et al. \cite{Frederickson76}. This algorithm first finds a sequence of targets by first solving a single TSP using an approximation algorithm and then splits the sequence into $k$ sub-sequences (one for each robot) of more or less equal travel times. If $\beta$ is the approximation factor of the algorithm for the single TSP, then the approximation factor of Frederickson et al.'s algorithm \cite{Frederickson76} is $\alpha:=\beta+1-\frac{1}{k}$. If the travel times are symmetric and satisfy the triangle inequality, and Christofides heuristic \cite{Chris76} is used as the approximation algorithm for the single TSP, then $\beta:=\frac{3}{2}$, and therefore, $\alpha=\frac{5}{2} -\frac{1}{k}$. For the parallel machine scheduling problem, there is a classic greedy heuristic by Graham \cite{Graham1969} with an approximation ratio of $2-\frac{1}{m}$ where $m$ denotes the number of human operators (or machines). 

The main contribution of this article is an algorithm for the TASSP with approximation ratios equal to $\frac{7}{2} -\frac{5}{2k}$, $\frac{5}{2} -\frac{1}{k}$ and $\frac{7}{2} -\frac{1}{k}$ when $m=1$, $m\geq k\geq 2$ and $k>m\geq 2$ respectively\footnote{We do not consider the trivial case when $m\geq k=1$ where just one human operator can handle all the tasks corresponding to the robot. In this case, the TASSP simply reduces to the single TSP.}. We also present computational results to corroborate the performance of the proposed approximation algorithm. 

\section{Problem Statement}
Let $T:=\{t_1,t_2,\cdots,t_n\}$ denote the set of targets and $R:=\{r_1,r_2,\cdots,r_k\}$ represent the robots. Assume there are at least two robots ($k\geq 2$), and all of them are initially located at the depot $d$. Let $V:=T\cup\{d\}$. Let $c({i,j})$ denote the time required to travel between any two vertices $i,j, \in V$. The travel times are symmetric and satisfy the triangle inequality. That is, $c({i,j})=c({j,i})$ and $c({i,j})\leq c({i,k}) + c({k,j})$ for all $i,j,k\in V$.

Each target has a specific task that needs to be performed collaboratively between a robot and a human operator. We assume each task must be completed without preemption, $i.e.$, once a human operator starts working with a robot on a task, the task cannot be interrupted before it it processed completely. Suppose $p_i$ denotes the processing time needed to collaboratively complete the task at target $i\in T$. There are $m$ human operators available to perform the tasks and it is assumed that each human operator can handle at most one task at any time instant. 

A robot on arrival at a target may have to wait at the target if there is no human operator available to work with the robot, $i.e.$, all the human operators are busy working with other robots. In this case, the robot waits for the next available human operator to process the task. Therefore, the total mission time incurred by a robot is the sum of its travel time, wait time and the processing times of all the tasks at the targets visited by it. A schedule for a robot is a partitioning of its mission time into time intervals where each time interval corresponds to its travel time or wait time or task processing time. 

The objective of the problem is to allocate and sequence the targets (or tasks), and assign a schedule for each robot such that
\begin{itemize}
    \item each target is visited once by some robot, 
    \item at most $m$ tasks are processed by the human operators at any time instant, 
    \item each robot returns to the depot at the end of its mission, and,
    \item the maximum mission time of any of the robots is minimized.
\end{itemize}

%\section{Review of Frederickson et al.'s algorithm for the min-max TSP}

%The 

\section{Algorithms for the TASSP} \label{sec:algorithms}

Prior to presenting the approximation algorithm, we present a greedy heuristic that can convert any solution, $F$, for the min-max TSP into a feasible solution for the TASSP. Let $F_r$ be the sequence of targets visited by  robot $r$ in solution $F$. Given $F:=\{F_{r_1},F_{r_2},\cdots, F_{r_k}\}$, the greedy heuristic works as follows: Starting at the depot, each robot will travel according to the sequence of targets assigned to it in $F$. When robot $r$ arrives at a target $u\in F_r$, any human operator, if available, is immediately assigned to work on the task at target $u$ with the robot $r$. If no human operator is available, the robot $r$ simply waits for the next operator to be available. Ties are broken arbitrarily when a human operator becomes available and multiple robots are waiting to collaborate. After all the targets are visited in $F_r$ and corresponding tasks completed, robot $r$ returns to the depot. 

We now present an approximation algorithm, $Approx$, for the TASSP. $Approx$ constructs two feasible solutions and chooses the solution with the least mission time for any robot. The steps in $Approx$ are as follows:
\begin{itemize}
    \item {\bf Algorithm 1}: Ignore the processing times at the targets and the scheduling constraints, and use Frederickson et al.'s algorithm \cite{Frederickson76} to solve the min-max TSP. Use the greedy heuristic presented above to convert the min-max TSP solution into a feasible solution for the TASSP. Let this solution be referred to as $S_1$. Let $Cost(S_1)$ denote the largest mission time of any robot in solution $S_1$.
    
    \item {\bf Algorithm 2}: For any edge $(i,j)$ joining targets $i,j\in T$, let the modified travel time be $\hat{c}({i,j}) = c({i,j}) + \frac{p_i}{2} + \frac{p_j}{2}$. For any edge $(i,d)$ joining a target $i$ and depot $d$, let the modified travel time be $\hat{c}({i,d}) = c({i,d}) + \frac{p_i}{2}$. Note that the modified travel times also are symmetric and satisfy the triangle inequality. Now, solve the min-max TSP with the modified travel times using the Frederickson et al.'s algorithm \cite{Frederickson76}.  Again, use the greedy heuristic to convert this min-max TSP solution into a feasible solution for the TASSP. Let this solution be referred to as $S_2$. Let $Cost(S_2)$ denote the largest mission time of any robot in solution $S_2$.
    
    \item Output $S\in \{S_1,S_2\}$ such that $Cost(S):=$ $\min\{Cost(S_1),$ $ Cost(S_2)\}$.
    
\end{itemize}

The following theorem is the main result of this article.

\begin{theorem}\label{theorem1}
The approximation ratio of Algorithm $Approx$ is equal to
\[ \left\{ \begin{array}{l}
    \frac{7}{2} -\frac{5}{2k}  \quad \text{if m=1},\\ 
     \\
    \frac{5}{2} -\frac{1}{k}  \quad ~\text{if m$\geq$ k$\geq 2$},\\
    \\
\frac{7}{2} -\frac{1}{k}  \quad ~\text{otherwise}.
  \end{array} \right.\]
\end{theorem}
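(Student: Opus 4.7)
The plan is to bound $Cost(S_1)$ and $Cost(S_2)$ separately and show that in each of the three regimes at least one of them lies within the claimed factor of $OPT$; since $Approx$ returns the minimum, this will suffice. I would first collect four lower bounds on $OPT$: \textbf{(a)} $OPT \geq OPT_{mtsp}$, where $OPT_{mtsp}$ is the optimum of the min-max multi-TSP on weights $c$, obtained by dropping the scheduling constraints; \textbf{(b)} $OPT \geq P/m$, because the $m$ operators collectively process $P = \sum_i p_i$ units of work, so the last task completes no earlier than $P/m$ and the robot performing it needs at least that much mission time; \textbf{(c)} $OPT \geq (L^* + P)/k$, obtained by summing the optimal mission times $\sum_r T_r^* \geq \sum_r L_r^* + P \geq L^* + P$, where the second inequality follows by shortcutting the $k$ depot-routes into a single TSP tour on $c$; and \textbf{(d)} $\widehat{OPT}_{mtsp} \leq OPT$, where $\widehat{OPT}_{mtsp}$ is the min-max multi-TSP optimum on the modified weights $\hat c$, since $\hat c(\text{route}_r^*) = L_r^* + P_r^* \leq T_r^*$ for the optimal routes. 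I would also establish the structural waiting bound $W_r \leq (P - P_r)/m$ for the greedy heuristic: whenever robot $r$ is waiting, all $m$ operators are busy with non-$r$ tasks, so $m W_r$ is at most the total non-$r$ processing $P - P_r$.

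For both $m \geq k \geq 2$ and $k > m \geq 2$ I would analyze $S_2$. Using the identity $\hat c(T) = c(T) + P$ for any single-vehicle depot tour (each $p_i$ splits as $p_i/2 + p_i/2$ across the two edges incident to target $i$), Frederickson's algorithm applied on $\hat c$ yields $\max_r \hat L_r \leq (5/2 - 1/k)\widehat{OPT}_{mtsp} \leq (5/2 - 1/k)\,OPT$ by \textbf{(d)}. When $m \geq k$, no robot ever waits in the greedy heuristic (at most $k \leq m$ robots can need an operator simultaneously), so $T_r = \hat L_r$ and $Cost(S_2) \leq (5/2 - 1/k)\,OPT$. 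When $k > m \geq 2$, the waiting bound gives $T_r = \hat L_r + W_r \leq \hat L_r + P/m$, which combined with \textbf{(b)} produces $Cost(S_2) \leq (5/2 - 1/k)\,OPT + P/m \leq (7/2 - 1/k)\,OPT$.

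The main technical difficulty is the case $m=1$, where the target ratio $7/2 - 5/(2k)$ is strictly sharper than a black-box Frederickson analysis would give. Here I would analyze $S_1$ using the \emph{decomposed} form of Frederickson's guarantee: the splitter applied to a Christofides tour of length $L \leq (3/2) L^*$ on the original weights yields $\max_r L_r \leq L/k + (1 - 1/k)\,D \leq (3/2) L^*/k + (1 - 1/k)\,OPT$, where the Frederickson lower-bound term $D$ (twice the longest depot-to-target distance) satisfies $D \leq OPT_{mtsp} \leq OPT$. The $m=1$ specialization of the waiting bound gives $T_r \leq L_r + P$, so $Cost(S_1) \leq (3/2) L^*/k + (1 - 1/k)\,OPT + P$. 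The crucial step is to plug in the \emph{sharp} inequality $L^* \leq k\cdot OPT - P$ from \textbf{(c)} rather than $L^* \leq k\cdot OPT$, which yields $(5/2 - 1/k)\,OPT + (1 - 3/(2k))\,P$; then using $P \leq OPT$ from \textbf{(b)} and $1 - 3/(2k) \geq 1/4 > 0$ for $k \geq 2$ produces exactly $(7/2 - 5/(2k))\,OPT$. A naive argument that bounds $\max_r L_r \leq (5/2 - 1/k)\,OPT$ outright and adds $P \leq OPT$ loses this saving and yields only $7/2 - 1/k$; preserving $L^*$ and $D$ as separate terms so that the aggregate bound $(L^* + P)/k \leq OPT$ can be injected is what allows the leftover $P$ to partially cancel the Christofides slack, and this is the one step in the theorem where more than a black-box invocation of Frederickson's ratio is required.
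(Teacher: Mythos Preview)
Your proposal is correct and follows essentially the same route as the paper: the same four lower bounds (your (a)--(d) are the paper's $C^*_{mmtsp}\le OPT$, $P/m\le OPT$, Lemma~2, and Lemma~4/5's reduction), the same waiting inequality (the paper's Lemma~1), and the same case split analyzing $S_2$ for $m\ge2$ and $S_1$ for $m=1$ using the decomposed Frederickson bound so that $L^*\le k\cdot OPT-P$ can be injected. The only noteworthy difference is that your argument for (d), observing directly that $\hat c(\text{route}_r^*)=L_r^*+P_r^*\le T_r^*$, is slightly cleaner than the paper's detour through monotonicity of $Cost^*_{tassp}(m)$ in $m$.
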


\section{Proof of Theorem \ref{theorem1}}
The computational complexity of $Approx$ depends mainly on the approximation algorithm by Frederickson et al. \cite{Frederickson76} which can be implemented in the order of $n^3$ steps. Therefore, $Approx$ runs in polynomial time. In this section, we show the bounds which will provide the approximation ratio. We first prove a key result that holds true for any sequence of targets $F_r$ visited by robot $r$. This result (Lemma \ref{lemma1}) will bound the mission time for any of the robots in terms of its travel cost and the processing times of all the tasks. This result will be used to prove bounds for both the Algorithms 1 and 2 in $Approx$.

Let $F_r :=(u_{r1},u_{r2},\cdots,u_{rn_r})$ denote the sequence of targets visited by the robot $r$. In this sequence, $u_{r1}$ is the first target visited by robot $r$, $u_{r2}$ is the second target visited by robot $r$ and so on. Let $Cost^{travel}(F_r)$ denote the travel time for robot $r$ to visit the targets in $F_r$, $i.e.$, $Cost^{travel}(F_r):= c({d,u_{r1}}) + \sum_{i=1}^{n_r-1} c({u_{r_i},u_{r_{i+1}}}) + c({u_{n_r},d})$.

\begin{lemma}\label{lemma1}
 Given $F=\{F_{r_1},F_{r_2},\cdots,F_{r_k}\}$, let $Cost_r(F)$ denote the mission time of robot $r$ after applying the greedy heuristic on $F$. Then, 
\begin{equation}
    Cost_r(F) \leq Cost^{travel}(F_r)  + (1-\frac{1}{m})\sum_{i\in F_r} p_i + \frac{\sum_{i\in T} p_i}{m}.
\end{equation}
\end{lemma}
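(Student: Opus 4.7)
My approach decomposes $Cost_r(F)$ into its three natural components and then bounds the only nontrivial one. By construction of the greedy heuristic, the mission of robot $r$ consists of travel, waits, and collaborative processing. The travel contribution is exactly $Cost^{travel}(F_r)$, and the total processing contribution is exactly $\sum_{i\in F_r} p_i$, since robot $r$ ultimately participates in processing each task in $F_r$ once without preemption. Writing $W_r$ for the total wait time of robot $r$, it therefore suffices to show that
\[
W_r \;\leq\; \frac{1}{m}\sum_{i\in T\setminus F_r} p_i,
\]
since combining this with the identity $\sum_{i\in T\setminus F_r} p_i = \sum_{i\in T} p_i - \sum_{i\in F_r} p_i$ immediately yields the claimed inequality.

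The bound on $W_r$ will come from a double-counting (charging) argument on operator-time. The key observation is that whenever robot $r$ is waiting (at some target $u\in F_r$), every one of the $m$ human operators must be busy; otherwise, the greedy rule would immediately pair a free operator with $r$, contradicting the fact that $r$ is waiting. I then claim that each such busy operator is, at that instant, processing a task at a target in $T\setminus F_r$. This is because a task at any target $v\in F_r$ can only be processed in collaboration with robot $r$ (since $F$ assigns each target uniquely to one robot, and collaboration requires that robot to be physically at $v$), but $r$ itself is currently waiting at $u$ and hence not collaborating on anything. Thus no operator can be working on a task in $F_r$ while $r$ waits.

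Integrating over the entire wait duration of $r$, the total operator-time expended during periods when $r$ is waiting equals $m\cdot W_r$ and, by the previous paragraph, is attributed entirely to tasks in $T\setminus F_r$. Since the total operator-time ever spent on tasks in $T\setminus F_r$ is exactly $\sum_{i\in T\setminus F_r} p_i$ (each such task is processed without preemption by a single operator), this yields $m\cdot W_r \leq \sum_{i\in T\setminus F_r} p_i$, completing the argument. The main obstacle I anticipate is the second step, namely the precise justification that no operator can be working on a task in $F_r$ while $r$ waits; this relies crucially on the unique-assignment property of $F$ together with the non-preemptive, co-located nature of each collaboration.
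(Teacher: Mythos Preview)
Your proof is correct and follows essentially the same argument as the paper: decompose the mission time into travel, processing, and wait, then bound the wait time by observing that all $m$ operators are busy on tasks outside $F_r$ whenever robot $r$ waits, giving $m\,W_r \le \sum_{i\notin F_r} p_i$. Your write-up is in fact a bit more explicit than the paper's in justifying why the busy operators must be working on tasks in $T\setminus F_r$.
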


\begin{proof}
The total wait time for robot $r$ after applying the greedy heuristic is given by $Cost^{wait}_r(F):= Cost_r(F)-Cost^{travel}(F_r)-\sum_{i\in F_r} p_i $. During this wait time, all the human operators were busy working on tasks not assigned to robot $r$. Therefore, as there are $m$ human operators, $mCost^{wait}_r(F)$ must be at most equal to $\sum_{i\notin F_r} p_{i}$. Hence,
\begin{align}
    mCost^{wait}_r(F)  \leq \sum_{i\notin F_r} p_{i} & \nonumber \\
  m(Cost_r(F)-Cost^{travel}(F_r)-\sum_{i\in F_r} p_i)  \leq \sum_{i\notin F_r} p_{i} & \nonumber \\
  \Rightarrow Cost_r(F)  \leq Cost^{travel}(F_r)  + \sum_{i\in F_r} p_i  + & \frac{\sum_{i\notin F_r} p_{i}}{m} \nonumber \\
   Cost_r(F)  \leq Cost^{travel}(F_r) + (1-\frac{1}{m})\sum_{i\in F_r} p_i  + & \frac{\sum_{i\in T} p_{i}}{m}. \nonumber 
\end{align}
\end{proof}

In the next lemma, we related the cost of an optimal TSP solution and the processing times of the tasks with the optimal TASSP cost.

\begin{lemma}\label{lemma2}
Let $Cost^*_{tassp}(m)$ denote the optimal cost of the TASSP with $m$ human operators and let $Cost^*_{tsp}$ denote the cost of an optimal TSP tour visiting all the vertices in $V$. Then, $\frac{Cost^*_{tsp}}{k} + \frac{\sum_{i\in T}  p_{i}}{k} \leq Cost^*_{tassp}$.
\end{lemma}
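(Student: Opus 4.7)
\textbf{Proof plan for Lemma \ref{lemma2}.} The plan is to lower-bound $Cost^*_{tassp}(m)$ by averaging the mission times of the $k$ robots in an optimal TASSP solution, and then separately bound the total travel contribution by $Cost^*_{tsp}$ via a shortcutting argument.

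First I would fix an optimal TASSP solution with routes $F^*=\{F^*_{r_1},\ldots,F^*_{r_k}\}$, and for each robot $r$ write $Cost_r(F^*)$ for its mission time. Since the maximum mission time is at least the average,
\begin{equation*}
Cost^*_{tassp}(m)\;=\;\max_{r} Cost_r(F^*)\;\geq\;\frac{1}{k}\sum_{r\in R} Cost_r(F^*).
\end{equation*}
Because every robot's mission time includes at minimum its own travel time plus the processing times of the tasks assigned to it (wait time is nonnegative), I have $Cost_r(F^*)\geq Cost^{travel}(F^*_r)+\sum_{i\in F^*_r}p_i$. Each target is visited by exactly one robot, so summing over $r$ gives
\begin{equation*}
\sum_{r\in R}Cost_r(F^*)\;\geq\;\sum_{r\in R}Cost^{travel}(F^*_r)\;+\;\sum_{i\in T} p_i.
\end{equation*}

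Next I would bound $\sum_r Cost^{travel}(F^*_r)$ from below by $Cost^*_{tsp}$. Each $F^*_r$ corresponds to a closed walk in $V$ that starts and ends at the depot $d$ and visits the targets assigned to robot $r$. Concatenating these $k$ closed walks at $d$ produces a single closed walk on $V$ that visits every target in $T$ and has total length $\sum_{r\in R}Cost^{travel}(F^*_r)$. Using the triangle inequality to shortcut this closed walk into a Hamiltonian tour on $V$ yields a TSP tour of length at most $\sum_{r}Cost^{travel}(F^*_r)$, so $Cost^*_{tsp}\leq\sum_{r}Cost^{travel}(F^*_r)$.

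Chaining these inequalities gives
\begin{equation*}
Cost^*_{tassp}(m)\;\geq\;\frac{1}{k}\Bigl(Cost^*_{tsp}+\sum_{i\in T} p_i\Bigr),
\end{equation*}
which is the desired bound. The only nonroutine step is the shortcutting argument, but it is standard given that $c$ is symmetric and satisfies the triangle inequality on $V$; everything else is just averaging and the trivial observation that mission time dominates travel plus processing. So I do not expect any serious obstacle.
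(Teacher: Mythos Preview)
Your proposal is correct and follows essentially the same approach as the paper: bound each robot's travel plus processing time by the optimal TASSP cost, sum over robots, and then lower-bound the total travel by $Cost^*_{tsp}$ via concatenating the $k$ depot-based tours and shortcutting with the triangle inequality. The only cosmetic difference is that you phrase the first step as ``max $\geq$ average'' whereas the paper bounds each term directly by the max before summing; the content is identical.
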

\begin{proof}
Let $F^*_r$ be the sequence of targets visited by robot $r$ in an optimal solution for the TASSP. Let the corresponding travel cost for visiting the targets in $F^*_r$ be denoted by $Cost^{travel*}_r(F)$. We have, $Cost^{travel*}_r(F) + \sum_{i\in F^*_r} p_i \leq Cost^*_{tassp}(m)$. Summing over all the $k$ robots, we get the following inequality:
\[\sum_{r\in R}Cost^{travel*}_r + \sum_{i\in T} p_i \leq k Cost^*_{tassp}(m).\]

Now, consider a feasible solution for the single TSP where a robot visits all the targets in the sequence given by $(F^*_{r_1},F^*_{r_2},\cdots,F^*_{r_k})$. That is, the robot starts from the depot and visits the targets in the sequence $F^*_{r_1}$, then the targets in the sequence $F^*_{r_2} $ and so on, and finally returns to the depot. Clearly, the cost of this solution for the single TSP must be upper bounded by $\sum_{r\in R}Cost^{travel*}_r$ as the travel times satisfy the triangle inequality. Therefore, 
\begin{align}
    \frac{Cost^*_{tsp}}{k} + \frac{\sum_{i\in T}  p_{i}}{k} & \leq \frac{\sum_{r\in R}Cost^{travel*}_r}{k} + \frac{\sum_{i\in T}  p_{i}}{k} \nonumber \\ & \leq Cost^*_{tassp}(m). \nonumber
\end{align}
\end{proof}

The next three Lemmas prove the approximation ratios of the proposed algorithm. Prior to that, we present a bounding result from Frederickson et al.'s algorithm \cite{Frederickson76}
which will be used in the proof of the next Lemma. Let $C^{*}_{tsp}$ and $C^{*}_{mmtsp}$ denote the optimal cost corresponding to the single TSP and the min-max TSP respectively. Then, the travel cost of the feasible solution produced by the  Frederickson et al.'s algorithm (Theorem 4. in \cite{Frederickson76}) is upper bounded by

\begin{align}
\frac{3}{2k} Cost^*_{tsp}(m) + (1-\frac{1}{k})C^*_{mmtsp}. \label{BoundFrederickson}
\end{align}

\begin{lemma}\label{lemma:Algo1}
Let $F_r$ be the sequence of targets visited by the robot $r$ in Algorithm 1. Given $F=\{F_{r_1},F_{r_2},\cdots,F_{r_k}\}$, let $Cost_r(F)$ denote the mission time of robot $r$ for the feasible solution obtained using Algorithm 1. Then, $Cost(S_1) =\max_{r\in R} Cost_r(F) \leq (\frac{5}{2} + m -\frac{1}{k} - \frac{3m}{2k} )Cost^*_{tassp}(m)$. For $m=1$, the approximation ratio of Algorithm 1 is $\frac{7}{2} - \frac{5}{2k}$.
\end{lemma}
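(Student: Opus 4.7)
The plan is to chain three ingredients: Lemma~\ref{lemma1} to peel the mission time into travel and processing contributions, the Frederickson et al.\ guarantee~(\ref{BoundFrederickson}) for the travel piece, and Lemma~\ref{lemma2} together with the parallel-machine lower bound $\sum_{i\in T} p_i \le m\,Cost^*_{tassp}(m)$ to re-express everything in units of $Cost^*_{tassp}(m)$.

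First I would apply Lemma~\ref{lemma1} to the robot attaining the maximum mission time; bounding the local processing sum $\sum_{i\in F_r} p_i$ by the global $\sum_{i\in T} p_i$ makes the weights $(1-\tfrac{1}{m})$ and $\tfrac{1}{m}$ combine into a single copy of $\sum_{i\in T} p_i$, so that
\[
Cost(S_1) \;\le\; \max_{r\in R} Cost^{travel}(F_r) + \sum_{i\in T} p_i.
\]
Because Algorithm~1 runs Frederickson et al.\ on the original travel metric, invoking~(\ref{BoundFrederickson}) replaces the travel term by $\tfrac{3}{2k}\,Cost^*_{tsp} + (1-\tfrac{1}{k})\,C^*_{mmtsp}$.

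Next I would eliminate $Cost^*_{tsp}$, $C^*_{mmtsp}$, and $\sum_{i\in T}p_i$ in favor of $Cost^*_{tassp}(m)$. The bound $C^*_{mmtsp} \le Cost^*_{tassp}(m)$ is immediate, since the $k$ tours used by an optimal TASSP solution are a feasible min-max TSP solution whose objective is at most the TASSP makespan. The pivotal step is to use Lemma~\ref{lemma2} in the stronger form $Cost^*_{tsp} \le k\,Cost^*_{tassp}(m) - \sum_{i\in T}p_i$; this releases a \emph{negative} $-\tfrac{3}{2k}\sum_{i\in T}p_i$ that partially cancels the $+\sum_{i\in T}p_i$ coming from Lemma~\ref{lemma1}, leaving a residual coefficient $\bigl(1-\tfrac{3}{2k}\bigr)$ on $\sum_{i\in T}p_i$, which is nonnegative for $k\ge 2$. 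Finally, the parallel-machine lower bound converts this residual into $\bigl(1-\tfrac{3}{2k}\bigr) m\,Cost^*_{tassp}(m)$, and collecting coefficients yields the claimed factor $\tfrac{5}{2}+m-\tfrac{1}{k}-\tfrac{3m}{2k}$; specializing to $m=1$ gives $\tfrac{7}{2}-\tfrac{5}{2k}$.

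The main obstacle is sign book-keeping for the processing-time terms. Lemma~\ref{lemma2} must be used before bounding $\sum_{i\in T}p_i$, so that the $-\tfrac{3}{2k}\sum_{i\in T}p_i$ it releases is available to cancel the $+\sum_{i\in T}p_i$ produced by Lemma~\ref{lemma1}; if one instead replaces $\sum_{i\in T}p_i$ by $m\,Cost^*_{tassp}(m)$ at the outset, that cancellation is destroyed and the $\tfrac{3m}{2k}$ savings disappear from the final approximation ratio.
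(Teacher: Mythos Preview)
Your proposal is correct and follows essentially the same route as the paper: Lemma~\ref{lemma1} for the mission-time decomposition, the Frederickson bound~\eqref{BoundFrederickson} for the travel term, Lemma~\ref{lemma2} rearranged as $Cost^*_{tsp}\le k\,Cost^*_{tassp}(m)-\sum_{i\in T}p_i$ to harvest the $-\tfrac{3}{2k}\sum_{i\in T}p_i$ savings, and then $C^*_{mmtsp}\le Cost^*_{tassp}(m)$ together with $\sum_{i\in T}p_i\le m\,Cost^*_{tassp}(m)$. The only cosmetic difference is that you merge the two processing contributions from Lemma~\ref{lemma1} into a single $\sum_{i\in T}p_i$ before invoking Lemma~\ref{lemma2}, whereas the paper keeps the $(1-\tfrac{1}{m})$ and $\tfrac{1}{m}$ pieces separate until the end; your ordering is arguably cleaner since the residual coefficient $1-\tfrac{3}{2k}$ is manifestly nonnegative for $k\ge 2$, while the paper's intermediate coefficient $1-\tfrac{1}{m}-\tfrac{3}{2k}$ can be negative (e.g.\ at $m=1$) and only becomes safe once recombined with the $\tfrac{1}{m}$ term.
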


\begin{proof}
Let $r^*:= \arg \max_{r\in R} Cost_r(F)$. Using Lemmas \ref{lemma1},\ref{lemma2} and the upper bounding result in \eqref{BoundFrederickson}, we get,

\begin{align}
Cost(S_1) & =\max_{r\in R} Cost_r(F)\nonumber \\ & \leq Cost^{travel}(F_{r*})  + (1-\frac{1}{m})\sum_{i\in F_{r^*}} p_i + \frac{\sum_{i\in T} p_i}{m} \nonumber \\
& \leq \frac{3}{2k} C^*_{tsp} + (1-\frac{1}{k})C^*_{mmtsp}  \nonumber \\ &  + (1-\frac{1}{m})\sum_{i\in F_{r^*}} p_i  + \frac{\sum_{i\in T} p_i}{m} \nonumber \\
& \leq \frac{3}{2} (Cost^*_{tassp}(m)-\frac{\sum_{i\in T}  p_{i}}{k}) + (1-\frac{1}{k})C^*_{mmtsp}  \nonumber \\ &  + (1-\frac{1}{m})\sum_{i\in F_{r^*}} p_i + \frac{\sum_{i\in T} p_i}{m}. \nonumber 
\end{align}

Now, note that $\sum_{i\in F_{r^*}} p_i \leq \sum_{i\in T} p_i$. Therefore, 
\begin{align}
Cost(S_1) &  \leq \frac{3}{2} (Cost^*_{tassp}(m)-\frac{\sum_{i\in T}  p_{i}}{k}) + (1-\frac{1}{k})C^*_{mmtsp}  \nonumber \\ &  + (1-\frac{1}{m})\sum_{i\in T} p_i + \frac{\sum_{i\in T} p_i}{m} \nonumber \\
& = \frac{3}{2} Cost^*_{tassp}(m) + (1-\frac{1}{k})C^*_{mmtsp}  \nonumber \\ &  + (1-\frac{1}{m}-\frac{3}{2k})\sum_{i\in T} p_i + \frac{\sum_{i\in T} p_i}{m}. \label{bound1}
\end{align}

Note that $C^*_{mmtsp} \leq Cost^*_{tassp}(m)$ as the min-max TSP does include any processing times or scheduling constraints. Also, as there are $m$ human operators and the sum of the processing times of all the tasks is $\sum_{i\in T} p_i$, $\frac{\sum_{i\in T} p_i}{m}$ serves as a trivial lower bound for $Cost^*_{tassp}(m)$. Substituting these lower bounds in \eqref{bound1}, we get,

\begin{align}
Cost(S_1)  & \leq \frac{3}{2} Cost^*_{tassp}(m) + (1-\frac{1}{k}) Cost^*_{tassp}(m) \nonumber \\ &   + (1-\frac{1}{m}-\frac{3}{2k})m  Cost^*_{tassp}(m) +  Cost^*_{tassp}(m)\nonumber \\
& = (\frac{5}{2} + m -\frac{1}{k} - \frac{3m}{2k} )Cost^*_{tassp}(m). \nonumber
\end{align}

\end{proof}

\begin{lemma}\label{lemma:Algo21}
Assume the number of human operators is at least equal to the number of robots, $i.e.$, $m\geq k$. Let $F_r$ be the sequence of targets visited by the robot $r$ in Algorithm 2. Then, Algorithm 2 directly provides a feasible solution to the TASSP and has an approximation ratio of $\frac{5}{2} -\frac{1}{k}$. 
\end{lemma}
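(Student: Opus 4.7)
The plan is to exploit the fact that when $m \geq k$, the greedy heuristic in Algorithm 2 never has any robot wait, so the TASSP cost collapses to exactly the modified travel cost $\hat{c}$. First I would verify that no robot ever waits: at any moment at most $k-1$ human operators can be busy serving the other $k-1$ robots, so at least $m - (k-1) \geq 1$ operators are free to immediately attend to a newly arriving robot. Consequently the mission time of any robot $r$ equals $Cost^{travel}(F_r) + \sum_{i\in F_r} p_i$, which by a direct telescoping computation on the definition of $\hat{c}$ is identical to the modified travel cost $\widehat{Cost}^{travel}(F_r)$ of the route $F_r$ under Algorithm 2. This immediately gives feasibility and a clean identity to work with.

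Next I would bound the two quantities appearing in Frederickson et al.'s inequality (\ref{BoundFrederickson}) applied in the modified metric: namely $\hat{C}^*_{tsp}$ and $\hat{C}^*_{mmtsp}$. For the min-max piece, consider the optimal TASSP assignment $F^* = \{F^*_{r_1},\ldots,F^*_{r_k}\}$; since $m\geq k$ no waiting occurs in the optimum either, so for each robot $r$ the quantity $\widehat{Cost}^{travel}(F^*_r) = Cost^{travel*}_r(F) + \sum_{i\in F^*_r} p_i$ is at most $Cost^*_{tassp}(m)$. Treating $F^*$ as a feasible min-max TSP solution in the modified metric gives $\hat{C}^*_{mmtsp} \leq Cost^*_{tassp}(m)$.

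For the single-TSP piece, I would mimic the proof of Lemma \ref{lemma2}: concatenate the optimal TASSP routes to form a single closed tour $d \to F^*_{r_1} \to F^*_{r_2} \to \cdots \to F^*_{r_k} \to d$. Because the modified metric $\hat{c}$ inherits symmetry and the triangle inequality from $c$ (this is the observation explicitly noted when Algorithm 2 is defined), the modified cost of this concatenated tour is bounded above by $\sum_{r\in R} \widehat{Cost}^{travel}(F^*_r) \leq k\cdot Cost^*_{tassp}(m)$, hence $\hat{C}^*_{tsp} \leq k\cdot Cost^*_{tassp}(m)$.

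Finally I would plug both bounds into (\ref{BoundFrederickson}) in the modified metric to obtain
\begin{align*}
Cost(S_2) &= \max_{r\in R} \widehat{Cost}^{travel}(F_r) \\
&\leq \frac{3}{2k}\,\hat{C}^*_{tsp} + \left(1-\tfrac{1}{k}\right)\hat{C}^*_{mmtsp} \\
&\leq \tfrac{3}{2}\,Cost^*_{tassp}(m) + \left(1-\tfrac{1}{k}\right)Cost^*_{tassp}(m) \\
&= \left(\tfrac{5}{2}-\tfrac{1}{k}\right)Cost^*_{tassp}(m).
\end{align*}
The main conceptual hurdle is not any single calculation but recognizing that the edge weight modification is specifically engineered to make the modified travel cost coincide with the wait-free mission time, so that Frederickson's min-max TSP guarantee transfers verbatim; once that identity is in hand, the two bounds on $\hat{C}^*_{mmtsp}$ and $\hat{C}^*_{tsp}$ follow by, respectively, direct inspection and a triangle-inequality concatenation argument analogous to Lemma \ref{lemma2}.
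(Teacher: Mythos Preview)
Your proposal is correct and follows essentially the same route as the paper: both argue that $m\geq k$ eliminates waiting, verify via telescoping that the mission time equals the $\hat{c}$-travel cost, and then invoke Frederickson et al.'s guarantee in the modified metric. The only difference is cosmetic: the paper observes directly that TASSP with $m\geq k$ \emph{is} the min-max TSP in $\hat{c}$ (so $Cost^*_{tassp}(m)=\hat{C}^*_{mmtsp}$) and cites the $\tfrac{5}{2}-\tfrac{1}{k}$ ratio in one line, whereas you unpack that ratio by separately bounding $\hat{C}^*_{tsp}\leq k\,Cost^*_{tassp}(m)$ and $\hat{C}^*_{mmtsp}\leq Cost^*_{tassp}(m)$ and substituting into \eqref{BoundFrederickson}---which is exactly how Frederickson's ratio is proved in the first place.
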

\begin{proof}
If $m \geq k$, it is possible to ensure there is no wait time for any robot independent of the sequences of targets assigned to the robots. This can be accomplished trivially by assigning each robot with a human operator while ensuring no operator is assigned to more than one robot. For this reason, there is always an optimal solution to the TASSP where the wait time of each robot is zero. Hence, if $m \geq k$, TASSP reduces to the problem where there are no scheduling constraints for the human operators and the objective is to find a sequence of targets for each robot such that each target is visited once by some robot and $\max_{r\in F_r }[Cost^{travel}(F_{r})  + \sum_{i\in F_{r}} p_i] $ is minimized. Note that the mission time for robot $r$ can also be rewritten in terms of the modified travel times as follows:
\begin{align}
    & Cost^{travel}(F_{r})  + \sum_{i\in F_{r}} p_i \nonumber \\ & = c({d,u_{r1}}) + \sum_{i=1}^{n_{r}-1} c({u_{ri},u_{r(i+1)}}) + c({u_{r n_{r}},d}) + \sum_{i\in F_{r}} p_i \nonumber \\
    & = c({d,u_{r1}}) + \frac{p_{u_{r1}}}{2} + \sum_{i=1}^{n_{r}-1} [c({u_{ri},u_{r(i+1)}}) + \frac{p_{u_{ri}}}{2} + \frac{p_{u_{r(i+1)}}}{2}] \nonumber \\ & + c({u_{rn_{r}},d}) + \frac{p_{u_{r n_r}}}{2} \nonumber \\
    & = \hat{c}({d,u_{r1}}) + \sum_{i=1}^{n_{r}-1} \hat{c}({u_{ri},u_{r{i+1}}}) + \hat{c}({u_{r n_{r}},d}).
\end{align}
Therefore, when $m\geq k$, TASSP reduces to solving a min-max TSP using the modified times to travel between any two vertices. As a result, applying Frederickson et al.'s algorithm \cite{Frederickson76} directly provides an approximation ratio of $\frac{5}{2} -\frac{1}{k}$. 
\end{proof}

\begin{lemma}\label{lemma:Algo22}
Assume the number of human operators is less than the number of robots, $i.e.$, $m < k$. Let $F_r$ be the sequence of targets visited by the robot $r$ in Algorithm 2. Given $F=\{F_{r_1},F_{r_2},\cdots,F_{r_k}\}$, let $Cost_r(F)$ denote the mission time of robot $r$ for the feasible solution obtained using Algorithm 2. Then, $Cost(S_2) =\max_{r\in R} Cost_r(F) \leq (\frac{7}{2} -\frac{1}{k} )Cost^*_{tassp}(m)$.
\end{lemma}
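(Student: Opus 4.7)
The plan is to mirror the template of Lemma \ref{lemma:Algo1}, but to apply Frederickson's estimate \eqref{BoundFrederickson} to the \emph{modified} min-max TSP instance that Algorithm 2 actually solves, and then to translate the two optima that appear back into lower bounds on $Cost^*_{tassp}(m)$. Throughout, I will write $r^* := \arg\max_{r\in R} Cost_r(F)$ and, for any route $F_r$, let $\hat{Cost}^{travel}(F_r)$ denote its length measured with the modified times $\hat{c}$.

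The first step is the edge-counting identity already exploited inside Lemma \ref{lemma:Algo21}, namely $Cost^{travel}(F_r) + \sum_{i \in F_r} p_i = \hat{Cost}^{travel}(F_r)$. Substituting this into Lemma \ref{lemma1} and discarding the resulting nonpositive term $-\frac{1}{m}\sum_{i\in F_{r^*}} p_i$ yields
\begin{equation*}
Cost(S_2) \;\leq\; \hat{Cost}^{travel}(F_{r^*}) + \frac{\sum_{i \in T} p_i}{m}.
\end{equation*}
The second step invokes \eqref{BoundFrederickson} on the modified instance, giving $\hat{Cost}^{travel}(F_{r^*}) \leq \frac{3}{2k}\hat{C}^*_{tsp} + (1-\frac{1}{k})\hat{C}^*_{mmtsp}$, and bounds each of the three terms that remain by $Cost^*_{tassp}(m)$. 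Applying the same edge-counting identity to a single TSP tour over $V$ gives $\hat{C}^*_{tsp} = C^*_{tsp} + \sum_{i \in T} p_i$, whence Lemma \ref{lemma2} yields $\hat{C}^*_{tsp} \leq k\cdot Cost^*_{tassp}(m)$ and therefore $\frac{3}{2k}\hat{C}^*_{tsp} \leq \frac{3}{2} Cost^*_{tassp}(m)$. The routes of any optimal TASSP solution are feasible for the modified min-max TSP and have per-route modified cost equal to travel-plus-processing on that route, which is at most the corresponding mission time (wait time is nonnegative), so $\hat{C}^*_{mmtsp} \leq Cost^*_{tassp}(m)$. Finally, $\frac{\sum_{i \in T} p_i}{m} \leq Cost^*_{tassp}(m)$ is the standard machine-loading lower bound already invoked in Lemma \ref{lemma:Algo1}. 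Summing the three contributions produces the claimed factor $\frac{3}{2} + (1 - \frac{1}{k}) + 1 = \frac{7}{2} - \frac{1}{k}$.

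The main obstacle I anticipate is the bookkeeping behind the identity $\hat{C}^*_{tsp} = C^*_{tsp} + \sum_{i\in T} p_i$: this is where the specific form of $\hat{c}$ (a half-processing charge at each endpoint of every edge) is crucial, and one must confirm that both endpoints of every tour edge, including the two edges incident to the depot, are credited exactly once, with $p_d$ treated as zero. Once this identity is in place and aligned with Frederickson's estimate so that the constant $\frac{3}{2k}$ in \eqref{BoundFrederickson} combines correctly with Lemma \ref{lemma2}, the remainder is a short chain of inequalities parallel in flavor to Lemma \ref{lemma:Algo1}, and no further technical machinery is needed.
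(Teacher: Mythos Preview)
Your argument is correct and reaches the same bound, but the route differs slightly from the paper's. The paper does not reopen Frederickson's estimate \eqref{BoundFrederickson} on the modified instance; instead it invokes Lemma \ref{lemma:Algo21} as a black box to get $Cost^{travel}(F_{r^*})+\sum_{i\in F_{r^*}}p_i\le(\tfrac{5}{2}-\tfrac{1}{k})\,Cost^*_{tassp}(m')$ for any $m'\ge k$, and then uses the monotonicity $Cost^*_{tassp}(k)\le Cost^*_{tassp}(m)$ (valid because any schedule with $m$ operators is feasible with $k\ge m$ operators) to pass to the regime $m<k$. You instead split Frederickson's output into the two pieces $\tfrac{3}{2k}\hat C^*_{tsp}$ and $(1-\tfrac{1}{k})\hat C^*_{mmtsp}$ and bound each directly against $Cost^*_{tassp}(m)$ via Lemma \ref{lemma2} and a feasibility argument; this sidesteps the monotonicity step entirely at the cost of invoking Lemma \ref{lemma2}, which the paper's proof of this lemma does not need. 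Both routes yield $\tfrac{5}{2}-\tfrac{1}{k}$ for the travel-plus-processing part and then add the $+1$ from the machine-loading lower bound. Your identity $\hat C^*_{tsp}=C^*_{tsp}+\sum_{i\in T}p_i$ is fine: every target contributes $p_i/2$ on each of its two incident tour edges and the depot contributes nothing, so the shift is constant over all tours and the optima coincide.
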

\begin{proof}
Let $r^*:= \arg \max_{r\in R} Cost_r(F)$. Using Lemma \ref{lemma1}, we get,

\begin{align}\label{lemma_inter}
Cost(S_2) & =\max_{r\in R} Cost_r(F)\nonumber \\
& \leq Cost^{travel}(F_{r*})  + (1-\frac{1}{m})\sum_{i\in F_{r^*}} p_i + \frac{\sum_{i\in T} p_i}{m} \nonumber \\
& \leq Cost^{travel}(F_{r*})  + \sum_{i\in F_{r^*}} p_i + \frac{\sum_{i\in T} p_i}{m}.
\end{align}
As discussed in Lemma \ref{lemma:Algo1}, 
\begin{align}\label{lemma_inter_1}
\frac{\sum_{i\in T} p_i}{m} \leq Cost^*_{tassp}(m).    
\end{align}. Also, using Lemma \ref{lemma:Algo21}, we get,

\begin{align}
   & Cost^{travel}(F_{r*}) + \sum_{i\in F_{r^*}} p_i \nonumber \\ & \leq \max_{r\in R} [Cost^{travel}(F_{r}) + \sum_{i\in F_{r}} p_i] \nonumber \\ & \leq  (\frac{5}{2} -\frac{1}{k})Cost^*_{tassp}(m){\rm~ for~ any~ }m\geq k.
\end{align}
For a fixed $k$, note that for any two positive integers $m_1,m_2$ such that $m_1\geq m_2$, $Cost^*_{tassp}(m_1)$ must be upper bounded by $Cost^*_{tassp}(m_2)$. This is due to the fact that any feasible solution to a TASSP with $m_2$ human operators can be trivially transformed into a feasible solution to a TASSP with $m_1$ human operators. Therefore,

\begin{align}\label{lemma_inter_2}
   & Cost^{travel}(F_{r*}) + \sum_{i\in F_{r^*}} p_i \nonumber \\ & \leq  (\frac{5}{2} -\frac{1}{k})Cost^*_{tassp}(m){\rm~ for~ any~ }m\geq k \nonumber \\ & \leq  (\frac{5}{2} -\frac{1}{k})Cost^*_{tassp}(m){\rm~ for~ any~ }m< k.
\end{align}

Substituting the bounds from equations \eqref{lemma_inter_1},\eqref{lemma_inter_2} in equation \eqref{lemma_inter} proves the Lemma. 

\end{proof}

The approximation ratios in the Theorem \ref{theorem1} directly follow from Lemmas \ref{lemma:Algo1}, \ref{lemma:Algo21} and \ref{lemma:Algo22}.

\begin{table*}[h]
    % \scriptsize
    \centering
    \caption{{A-posteriori guarantee for small instances}}
    %   \makebox[\linewidth][r]{
    \resizebox{\textwidth}{!}{\begin{tabular}{c|c|c|c|c|c|c|c|c|c|c}
        \toprule
        \multirow{2}{*}{Instance No.} & \multirow{2}{*}{$|V|$} & \multirow{2}{*}{$k$} & \multirow{2}{*}{$m$}& \multicolumn{2}{|c|}{$Approx$} & \multicolumn{2}{|c|}{{Optimal}} & \multirow{2}{*}{Lower Bound} & \multicolumn{2}{c}{{$A-posteriori$ guarantee}}\\
        \cmidrule{5-8} \cmidrule{10-11}
         & & & & Cost & Run Time (secs) & Cost & Run Time (secs) & & Using optimal cost & Using lower bound \\
        \midrule

1  & 12  & 4  & 3  & 336.6  & 0.6    & 292.3  & 42.1  & 253.9 & 1.15 & 1.33 \\ 
2  & 12  & 4  & 3  & 216.8  & 0.5    & 216.5  & 39.1  & 175.9 & 1.00 & 1.23 \\ 
3  & 12  & 4  & 3  & 285.3  & 0.5    & 242.0  & 15.8  & 201.2 & 1.18 & 1.42\\ 
4  & 12  & 4  & 3  & 308.4  & 0.5    & 280.3  & 23.3  & 224.8 & 1.10 & 1.37\\ 
5  & 12  & 4  & 3  & 259.5  & 0.5    & 199.4  & 28.8  & 174.7 & 1.30 & 1.49\\ 
6  & 12  & 4  & 3  & 304.1  & 0.5    & 232.0  & 22.3  & 168.6 & 1.31 & 1.8\\ 
7  & 12  & 4  & 3  & 313.9  & 0.5    & 304.5  & 179.0  & 268.2 & 1.03 & 1.17\\ 
8  & 12  & 4  & 3  & 219.9  & 0.5    & 185.4  & 20.5  & 162.0  & 1.19 & 1.36\\ 
9  & 12  & 4  & 3  & 243.3  & 0.5    & 210.1  & 12.4  & 156.3  & 1.16 & 1.56\\ 
10  & 12  & 4  & 3  & 320.9  & 0.5   & 276.8  & 151.4  & 251.2 & 1.16 & 1.28\\ 
        \midrule
   
11  & 12  & 3  & 2  & 290.3  & 0.5    & 279.5  & 352.6  & 211.5 & 1.04 & 1.37 \\ 
12  & 12  & 3  & 2  & 295.0  & 0.5    & 255.4  & 221.0  & 190.1 & 1.15 & 1.55 \\ 
13  & 12  & 3  & 2  & 318.9  & 0.5    & 278.5  & 580.3  & 215.5 & 1.14 & 1.48 \\ 
14  & 12  & 3  & 2  & 340.6  & 0.5    & 319.4  & 216.2  & 230.3 & 1.07 & 1.48 \\ 
15  & 12  & 3  & 2  & 261.4  & 0.5    & 225.3  & 151.6  & 181.4 & 1.16 & 1.44 \\ 
16  & 12  & 3  & 2  & 339.2  & 0.5    & 319.6  & 139.6  & 252.2 & 1.06 & 1.35 \\ 
17  & 12  & 3  & 2  & 324.1  & 0.5    & 284.7  & 236.9  & 212.8 & 1.14 & 1.52 \\ 
18  & 12  & 3  & 2  & 230.6  & 0.5    & 225.4  & 262.9  & 165.5 & 1.02 & 1.39 \\ 
19  & 12  & 3  & 2  & 374.5  & 0.5    & 309.9  & 105.3  & 238.4 & 1.21 & 1.57 \\ 
20  & 12  & 3  & 2  & 352.5  & 0.5    & 335.0  & 338.3  & 266.6 & 1.05 & 1.32 \\ 
        \midrule

21  & 10  & 3  & 2  & 242.1  & 0.51    & 230.3  & 8.4  & 153.0  & 1.05   & 1.58\\ 
22  & 10  & 3  & 2  & 279.6  & 0.48    & 249.3  & 7.8  & 187.1  & 1.12   & 1.49\\ 
23  & 10  & 3  & 2  & 286.9  & 0.48    & 271.4  & 5.6  & 197.5  & 1.06   & 1.45\\ 
24  & 10  & 3  & 2  & 279.4  & 0.48    & 246.0  & 6.9  & 189.6  & 1.14   & 1.47\\ 
25  & 10  & 3  & 2  & 277.4  & 0.48    & 269.3  & 6.5  & 210.7  & 1.03   & 1.32\\ 
26  & 10  & 3  & 2  & 231.8  & 0.48    & 210.2  & 6.0  & 170.3  & 1.10    & 1.36\\ 
27  & 10  & 3  & 2  & 259.7  & 0.49    & 259.7  & 6.7  & 231.7  & 1.00    & 1.12\\ 
28  & 10  & 3  & 2  & 235.8  & 0.48    & 225.8  & 5.4  & 175.4  & 1.04   & 1.34\\ 
29  & 10  & 3  & 2  & 268.2  & 0.48    & 261.1  & 3.5  & 232.9  & 1.03   & 1.15\\ 
30  & 10  & 3  & 2  & 286.7  & 0.48    & 250.3  & 6.3  & 203.8  & 1.15   & 1.41\\ 
        \bottomrule
    \end{tabular}}
    \label{tab:solution}
\end{table*}

\begin{table*}
    % \scriptsize
    \centering
    \caption{A-posteriori guarantee for larger instances}
    %   \makebox[\linewidth][r]{
    \resizebox{\textwidth}{!}{\begin{tabular}{c|c|c|c|c|c|c|c|c|c|c|c|c}
        \toprule
        \multirow{2}{*}{Instance No.} & \multirow{2}{*}{$|V|$} & \multirow{2}{*}{$k$} & \multicolumn{5}{|c|}{Cost of the solution using $Approx$} &  \multicolumn{5}{c}{{$A-posteriori$ guarantee using lower bounds}}\\
        \cmidrule{4-8} \cmidrule{9-13}
         & & & $m=1$ & $m=2$ & $m=3$ & $m=4$ & $m=5$  & $m=1$ & $m=2$ & $m=3$ & $m=4$ & $m=5$ \\
        \midrule
31 & 50 & 5 & 692.3 & 380.6 & 283.2 & 268.4 & 259.4 & 1.04 & 1.14 & 1.27 & 1.39 & 1.35  \\ 
32 & 50 & 5 & 691.0 & 374.3 & 286.0 & 255.1 & 249.6 & 1.04 & 1.13 & 1.29 & 1.33 & 1.30  \\ 
33 & 50 & 5 & 680.1 & 384.0 & 282.1 & 270.5 & 266.3 & 1.03 & 1.16 & 1.28 & 1.40 & 1.38  \\ 
34 & 50 & 5 & 640.5 & 358.4 & 284.6 & 254.8 & 251.8 & 1.04 & 1.17 & 1.39 & 1.42 & 1.40  \\ 
35 & 50 & 5 & 637.9 & 338.0 & 255.0 & 227.1 & 226.4 & 1.04 & 1.10 & 1.24 & 1.26 & 1.26  \\ 
36 & 50 & 5 & 626.6 & 345.8 & 275.4 & 257.7 & 246.4 & 1.04 & 1.14 & 1.37 & 1.47 & 1.40  \\ 
37 & 50 & 5 & 651.8 & 352.0 & 264.6 & 245.9 & 240.9 & 1.03 & 1.11 & 1.26 & 1.38 & 1.35  \\ 
38 & 50 & 5 & 655.2 & 352.4 & 271.4 & 242.0 & 233.9 & 1.03 & 1.10 & 1.28 & 1.36 & 1.31  \\ 
39 & 50 & 5 & 688.7 & 381.1 & 288.4 & 264.4 & 256.8 & 1.04 & 1.15 & 1.30 & 1.38 & 1.34  \\ 
40 & 50 & 5 & 695.3 & 384.7 & 292.7 & 269.7 & 260.2 & 1.05 & 1.16 & 1.32 & 1.40 & 1.35  \\
41 & 50 & 5 & 632.5 & 349.7 & 274.6 & 247.1 & 238.5 & 1.05 & 1.16 & 1.37 & 1.37 & 1.32  \\ 
42 & 50 & 5 & 670.5 & 360.1 & 271.1 & 261.9 & 253.2 & 1.01 & 1.09 & 1.23 & 1.38 & 1.33  \\ 
43 & 50 & 5 & 647.4 & 360.1 & 291.0 & 255.4 & 250.5 & 1.02 & 1.13 & 1.37 & 1.39 & 1.37  \\ 
44 & 50 & 5 & 606.5 & 341.7 & 253.8 & 238.5 & 238.5 & 1.02 & 1.15 & 1.28 & 1.34 & 1.34  \\ 
45 & 50 & 5 & 745,9 & 425.8 & 308.7 & 282.9 & 280.5 & 1.01 & 1.16 & 1.26 & 1.39 & 1.38  \\ 
46 & 50 & 5 & 682.5 & 373.2 & 282.4 & 252.7 & 251.1 & 1.03 & 1.13 & 1.28 & 1.33 & 1.32  \\ 
47 & 50 & 5 & 709.6 & 404.8 & 317.6 & 294.3 & 288.8 & 1.07 & 1.22 & 1.44 & 1.54 & 1.51  \\ 
48 & 50 & 5 & 746.0 & 414.2 & 316.1 & 306.7 & 286.5 & 1.05 & 1.16 & 1.33 & 1.51 & 1.42  \\ 
49 & 50 & 5 & 631.9 & 351.3 & 269.7 & 247.2 & 244.2 & 1.03 & 1.15 & 1.33 & 1.38 & 1.36  \\ 
50 & 50 & 5 & 679.3 & 371.9 & 292.6 & 262.5 & 261.5 & 1.04 & 1.14 & 1.34 & 1.39 & 1.38  \\ 
51 & 50 & 5 & 673.7 & 374.2 & 288.8 & 272.9 & 263.8 & 1.04 & 1.16 & 1.34 & 1.46 & 1.41  \\ 
52 & 50 & 5 & 630.6 & 333.8 & 262.3 & 235.4 & 229.5 & 1.05 & 1.11 & 1.30 & 1.34 & 1.30  \\ 
53 & 50 & 5 & 636.2 & 339.1 & 262.0 & 241.0 & 236.9 & 1.04 & 1.11 & 1.28 & 1.38 & 1.36  \\ 
54 & 50 & 5 & 671.7 & 371.1 & 277.2 & 255.0 & 251.0 & 1.09 & 1.21 & 1.35 & 1.41 & 1.38  \\ 
55 & 50 & 5 & 639.2 & 350.2 & 268.7 & 243.0 & 241.6 & 1.04 & 1.14 & 1.31 & 1.34 & 1.33  \\ 

        \bottomrule
    \end{tabular}}
    \label{tab:49targets}
\end{table*}

\section{Simulations}
In this section, we implement the approximation algorithm on a set of instances to infer the a-posteriori guarantees, $i.e.$, for a given instance, the a-posteriori guarantee is defined as the ratio of the cost of the feasible solution obtained by the approximation algorithm and the optimal cost. These guarantees are generally lower than the approximation ratio which is a (worst case) theoretical bound true for any instance of the problem. The optimal cost was obtained by using a Mixed Integer Linear Programming Formulation of the TASSP given in the appendix. This formulation was implemented in Gurobi \cite{gurobi} and worked for small instances\footnote{For some instances with 15 targets, the run time in Gurobi was in the order of days.} (up to 12 vertices, 4 vehicles and 3 operators). For larger instances, we used the maximum of the lower bounds given by $L_1:=\frac{Cost^*_{tsp} + \sum_{i\in T}  p_{i} }{k} $ (Lemma \ref{lemma2}), the trivial bounds $L_2:=\frac{\sum_{i\in T}p_i}{m}$ and $L_3:= \max_{i\in T}(2c(d,i) + p_i)$ as a proxy for the optimal cost. 

We first generated 30 instances with at most $|V|=12$ vertices (targets and the depot), $k=4$ robots and $m=3$ human operators. The location of the vertices were sampled from an area of size $50 \times 50$ units. The speed of each robot was assumed to be 1 unit, and therefore, the travel time between any two vertices was set to be equal to the Euclidean distance between the vertices. Given an instance, the processing time for each target was sampled from a normal distribution with a mean value of $50 \%$ of the average travel times and a standard deviation of $2 \%$ of the average travel times. $Approx$ was coded in Julia \cite{bezanson2017julia} with the help of the NetworkX package, \cite{hagberg2008exploring} and the computations were run on Mac Pro (8-Core Intel Xeon E5 processor @3 GHz, 32 GB RAM). The cost of the solution obtained by $Approx$, the optimal cost and their corresponding computation times are  shown in Table \ref{tab:solution}. The approximation ratios as calculated by Theorem \ref{theorem1} for these instances are 3.25 ($=\frac{7}{2}-\frac{1}{4}$) and 3.16 ($=\frac{7}{2}-\frac{1}{3}$) for instances 1-10 and 11-30 respectively. As expected the a-posteriori guarantees as shown in Table \ref{tab:solution} were much lower than the approximation ratios. The average of the a-posteriori guarantees computed using the optimal costs was equal to $\approx$ 1.1. On the other hand, the average of the a-posteriori guarantees computed using the lower bounds was equal to $\approx$ 1.4. The average run time of $Approx$ was $\approx 0.6$ secs for the instances in this set while the average run time of the formulation in Gurobi varied significantly with an average of $\approx 106.75$ secs. 

The computational results for the larger instances are shown in Table \ref{tab:49targets}. In these instances, $|V|=50$, $k=5$ and the number of operators ($m$) was increased from 1 to $k$. The locations of the vertices and the processing times of the corresponding tasks were generated using the same procedure as before. As we could not compute optimal solutions for these instances, the a-posteriori guarantees were computed using the lower bounds. These results are shown in Table  \ref{tab:49targets}. As expected, for a given instance, the cost of the feasible solution monotonically decreased with $m$  (as discussed in equation \eqref{lemma_inter_2}, Lemma \ref{lemma:Algo22}). Correspondingly, we also observed that the a-posteriori guarantees increased with $m$. One of the reasons this occurred is because the lower bounds got poorer; specifically, for all the instances corresponding to $m=1$, $m=2$ and $m=3$, the lower bound computed using $L_2$ was binding ($i.e.$, $L_2=\max\{L_1,L_2,L_3\}$) whereas for $m=4$ and $m=5$, the lower bounds computed using $L_1$ was binding. As $L_2$ is inversely proportional to $m$, it did not contribute much for higher values of $m$. %The time schedule of a feasible solution obtained by $Approx$ for one of the 50 node instances is shown in Fig. \ref{fig:my_label}.   

\section{Conclusions}
This article considered a task allocation, sequencing and scheduling problem for a team of human operators and robots. An approximation algorithm which combines ideas from vehicle routing and scheduling theory was presented. Computational results was also presented to corroborate the performance of the approximation algorithm. Future work can focus on developing exact algorithms and better lower bounds. Algorithms addressing uncertainties with respect to task processing times or travel times for the robots will also be useful in practical applications.

\bibliographystyle{IEEEtran}
\bibliography{references.bib}

 \section{Appendix - Mixed Integer Linear Program}
% \noindent \textbf{Data:}\\
% $m$ = \# operators.\\
% $k$ = \# vehicles.\\
% $p_i$ = processing time of task $i$.\\
% $c_{i,j}$ = time required by a vehicle to travel from target $i$ to target $j$.\\
%Here, we present a Mixed Integer Linear Program to solve the TASSP. In this formulation, the target assignment to vehicles and operators are modeled similar to that in the MTZ formulation for the multiple TSP.\\ 
\noindent \textbf{Decision Variables:} For all $i,j\in V, i\neq j$, the binary variable $x_{i,j}$ is equal to 1 if a robot is traveling from $i$ to $j$ and is equal to 0 otherwise. Similarly, for all $u,v\in V$, $u\neq v$, $y_{u,v}$ is equal to 1 if an operator works on the task at $v$ immediately after the task at $u$, and is equal to zero otherwise. There are also continuous variables defined as follows:

\noindent $vet_i$ = time at which a robot reaches target $i \in T$.\\
$vlt_i$ = time at which a robot  leaves target $i \in T$.\\
$tst_i$ = time at which the processing of task at $i \in T$ is started.\\
$MT$ = maximum mission time of any robot.\\

\noindent \textbf{Objective:}
The objective is to minimize the maximum mission time of any robot, $i.e.$, $\min MT$.

\noindent \textbf{Constraints:}\\
Number of incoming and outgoing edges from the depot is equal to the number of robots. That is,
\begin{equation}
     \sum_{j \in T} x_{j,d} = \sum_{j \in T} x_{d,j} = k.
\end{equation}    

There is exactly one incoming and outgoing travel edge for each target. That is,
\begin{equation}
    \sum_{i \in V\setminus{j}} x_{i,j} = \sum_{i \in V\setminus{j}} x_{j,i} = 1~ \forall j \in T.
\end{equation}  

The depot is considered as a dummy vertex from which the operators leave for processing the tasks and return after completing them. That is,

\begin{equation}
    \sum_{j \in T} y_{d,j} = \sum_{j \in T} y_{j,d} = m.
\end{equation}    

An operator can work on at most one other task immediately after completing a task at a target. That is,
\begin{equation}
    \sum_{i \in V\setminus{j}} y_{i,j} = \sum_{i \in V\setminus{j}} y_{j,i} = 1~ \forall j \in T.
\end{equation}    

The earliest a robot can reach a target is at least equal to the sum of the completion time of its previous task (if any) and the time taken to travel from its previous vertex. That is,

\begin{equation}
    (vet_j - tst_i)x_{i,j} \geq (c(i,j) + p_i)x_{i,j}~~ \forall i \in T, j \in T\setminus{i}.
    \label{eq:mc1}
\end{equation}

The earliest an operator can process a task is after completing his/her prior task if any. That is,
\begin{equation}
    (tst_j - tst_i)y_{i,j} \geq p_iy_{i,j}~~ \forall i \in T, j \in T\setminus{i}.
    \label{eq:mc2}
\end{equation}

The earliest the robot can reach a target is at least equal to the time taken for it to travel from the depot to the target. That is,
\begin{equation}
    vet_i \geq c(d,i)x_{d,i}~  \forall i \in T.
\end{equation}

The earliest a task can be processed is at least after the robot has reached the target.
\begin{equation}
    tst_i \geq vet_i~ \forall i \in T.
\end{equation}

% \begin{equation}
%     vlt_i = tst_i + p_i; \; \forall i \in T.
% \end{equation}

For any target, the maximum mission time, $MT$, must be at least equal to the sum of the task completion time at the target and the time taken to return from the target to the depot. That is,
\begin{equation}
    tst_i + p_i + c(i,d) x_{i,d} \; \leq MT ~~ \forall i \in T.
\end{equation}

% \begin{equation}
%     tst_i \geq 0; \;  \forall i \in T.
% \end{equation}

Though constraints \eqref{eq:mc1}--\eqref{eq:mc2} contain bi-linear terms, they can be linearized using the McCormick relaxation \cite{mccormick1976computability} without loosing any information, as the relaxation is exact for bi-linear variables. So, the formulation can be recast as a Mixed Integer Linear Program (MILP). In the interest of space, we omit the relaxed constraints here.

\end{document}